\pgfplotsset{width=8cm,compat=1.9}
\renewcommand{\ALG@name}{Protocol}
\newtheorem{theorem}{Theorem}[section]
\newtheorem*{theorem*}{Theorem}
\newtheorem{lemma}[theorem]{Lemma}
\newtheorem*{corollary*}{Corollary}
\theoremstyle{definition}
\newcommand{\Ber}{\mathrm{Ber}}
\newcommand{\KL}{\mathop{D_{\mathrm{KL}}}\limits}
\newcommand{\EXP}{\mathsf{Exp}}
\DeclareMathOperator*{\argmax}{arg\,max}
\newif\ifanonymous
\begin{document}
\title{A Tight Lower Bound for Non-stochastic Multi-armed Bandits with Expert Advice}

\author[1]{Zachary Chase}
\author[2]{Shinji Ito}
\author[3]{Idan Mehalel}
\affil[1]{Kent State University}
\affil[2]{The University of Tokyo and RIKEN}
\affil[3]{The Hebrew University}

\maketitle

\begin{abstract}
We determine the minimax optimal expected regret in the classic \emph{non-stochastic multi-armed bandit with expert advice} problem, by proving a lower bound that matches the upper bound of \cite{kale2014multiarmed}. The two bounds determine the minimax optimal expected regret to be $\Theta\mleft( \sqrt{T K \log \frac{N}{K} } \mright)$, where $K$ is the number of arms, $N$ is the number of experts, and $T$ is the time horizon.
\end{abstract}


\section{Introduction}
The seminal work \cite{auer2002nonstochastic} presented the fundamental \emph{non-stochastic multi-armed bandit with expert advice} problem (BwE, for short), which is defined as follows. Let $T,N,K \in \mathbb{N}$ where $K$ is the number of arms, $N$ is the number of experts, and $T$ is the time horizon.  On each round $t \in [T]$, each expert $j \in [N]$ selects an advice $e_t(j) \in [K]$ which is presented to the learner. Then, the learner pulls an arm $I_t \in [K]$ and the adversary simultaneously assigns a loss $\ell_t(r) \in \{0,1\}$ for each arm $r \in [K]$. The learner observes only the loss it suffers, namely $\ell_t(I_t)$, and the other losses remain unrevealed. Generally, the expert advice and losses are allowed to be fractional, that is, $e_t(j)$ is a probability distribution over $[K]$, and $\ell_t(r) \in [0,1]$. However, for the sake of proving the lower bound, the simpler discrete version of the problem suffices. Note that the adversary is \emph{adaptive}: the assignment of expert advice and losses to arms in each round may (and will, in our construction) depend on the learner's past choices. The learner's goal is to minimize the expected \emph{regret}: the difference between its cumulative loss and the cumulative loss of the best expert, which is the expert with minimal cumulative loss.

\subsection{Previous results}
Throughout the paper, we assume that $T \geq \Omega \mleft(K' \log (2N/K') \mright)$, where $K' = \min\{K,N\}$; in the complementing case, the lower bounds proved in this work and in \cite{auer2002nonstochastic} imply that the minimax optimal expected regret is $\Theta \mleft( T \mright)$. We further assume that $N \ge K$; in the complementing case, it is known that the minimax optimal expected regret is $\Theta\mleft(\sqrt{T N} \mright)$ \cite{auer2002nonstochastic, kale2014multiarmed}.

The seminal work \cite{auer2002nonstochastic} came up with the $\EXP4$ algorithm, which has been used as a building block or inspiration for many partial feedback algorithms. A partial list of examples include \cite{mcmahan2009tighter, seldin2011pac, daniely2013price, raman2024multiclass}. The analysis of \cite{auer2002nonstochastic} bounds the \emph{pseudo-regret} (see Section~\ref{sec:reduction-step1} for a formal definition) of $\EXP4$ by $O\mleft( \sqrt{T K \log N} \mright)$. This upper bound remained the best known for this problem, until the work \cite{kale2014multiarmed} which considered a generalization of the problem where the expert advice is only partially observed by the learner, which, as a by-product, improved the upper bound in the general case as well to $O\mleft( \sqrt{T K \log (N/K)} \mright)$. Furthermore, this quantity upper bounds the optimal expected regret of the problem, not just the weaker pseudo-regret. 

As for lower bounds, until the work \cite{seldin2016lower}, the best known lower bound was $\Omega\mleft( \sqrt{T} \mleft( \sqrt{K} + \sqrt{\log N} \mright)  \mright)$, where the left summand is from the lower bound for the multi-armed bandit (without experts) problem given in \cite{auer2002nonstochastic}, and the right summand is from the lower bound on the full-info prediction with expert advice problem, given e.g.\ in \cite{cesa2006prediction}. The work \cite{seldin2016lower} significantly improved the lower bound to $ \Omega \mleft(\sqrt{T K \frac{\log N}{\log K}} \mright)$, only off by a $\log K$ factor from the upper bound. They conjectured that this lower bound is tight, which is refuted in this work.

The works \cite{mannor2004sample, chen2024interpolating} identified a useful connection between minimizing regret problems (such as BwE), and identification of best arm/expert problems. Such identification problems are often referred to as the \emph{pure-exploration} variations of the regret minimization problems \cite{even2002pac, mannor2004sample}. Intuitively, the idea is that in settings where a single entity (arm/expert) is significantly better than all other entities throughout the game, a learner who minimizes regret must identify this entity in an early stage of the game. This idea allows us to consider identification problems instead of regret minimization problems.
Armed with this approach, and inspired by a problem instance presented in \cite{chen2024interpolating}, the two recent works \cite{ito2024minimax, cesa2025improved} have managed to prove a tight lower bound of $\Omega \mleft(\sqrt{T K\log (N/K)} \mright)$ against a restricted learner, having the property that its choice in round $t$ is independent of the expert advice of round $t$ (the result of \cite{cesa2025improved} is somewhat weaker, as they consider a slightly even more restricted learner). This setting is sometimes referred to as \emph{proper} online learning \cite{hanneke2021online}, while the general setting is known as \emph{improper} learning.

\subsection{Our contribution}
Building on the ideas and the hard problem instance  used in \cite{ito2024minimax}, we manage to prove the same $\Omega \mleft(\sqrt{T K\log (N/K)} \mright)$ lower bound for the classic problem, without placing any restrictions on the learner. The formal statement of the bound appears in Theorem~\ref{thm:main}. A summary of the bounds proved for this problem may be found in Table~\ref{tab:regretbound}. In the following subsection, we describe the proof sketch.

\begin{table}[t]
  \centering
  \begin{tabular}{lll}
    \toprule
    Setup & Reference & Bound \\
    \midrule
    Standard (pr) & \cite{auer2002nonstochastic} & $O\left(\sqrt{TK \log N}\right)$ \\
    Standard & \cite{kale2014multiarmed} & $O\left(\sqrt{TK \log \frac{N}{K}}\right)$ \\
    Standard & \cite{seldin2016lower} & $\Omega\left(\sqrt{TK \frac{\log N}{\log K}}\right)$ \\
    Proper* & \cite{cesa2025improved} & $\Omega \left(\sqrt{TK \log \frac{N}{K}}\right)$ \\
    Proper & \cite{ito2024minimax} & $\Omega \left(\sqrt{TK \log \frac{N}{K}}\right)$ \\
    Standard & \textbf{[This work] } & $\Omega \left(\sqrt{TK \log \frac{N}{K}}\right)$ \\
    \bottomrule
  \end{tabular}
    \caption{Upper ($O(\cdot)$) and lower ($\Omega(\cdot)$) bounds on the expected regret for the multi-armed bandit with expert advice (BwE) problem. The \emph{Standard (pr)} setup refers to the standard setup considered in this paper, but where the bounded quantity is the weaker pseudo-regret, and not the expected regret. The \emph{proper} setup refers to a restricted learner whose choice of arm in round $t$ does not depend on the advice of round $t$.  The \emph{proper*} setting refers to the setting considered in \cite{cesa2025improved}, in which the learner is slightly more restricted than in the proper setting.}
    \label{tab:regretbound}
\end{table}

\subsection{Proof Sketch}
Our proof proceeds in four steps, described in high-level in the following subsections. Many of the ideas we build on were discovered by \cite{ito2024minimax}, but there are a few key differences between the proofs, which we discuss in Section~\ref{sec:proof-differences}.
\subsubsection{Step 1: Reduction from \emph{Special Batch Identification} (SBI) to BwE} \label{sec:sketch-1}
The first idea used in the proof, originated from \cite{chen2024interpolating} and further used in \cite{ito2024minimax} is to reduce some sort of an identification problem to BwE. We call the specific identification problem we use \emph{Special Batch Identification} or SBI, for short. This problem is well defined in the setting we consider, in which we partition the $N$ experts to roughly\footnote{In the formal proof we use one more ``dummy" batch, which we add for technical reasons.} $k = K/2$ many batches, where each batch contains roughly $n = N/k$ experts. Every batch $u \in [k]$ is a distinct ``world" in the sense that the experts in batch $u \in [k]$ can only advise one of two arms associated with their batch: $(u,0)$ or $(u,1)$. Furthermore, in every round, precisely one arm of each batch is ``correct" (has loss $0$) and the other is incorrect (has loss $1$). This creates a situation of $k$ many distinct binary ``worlds". The adversary chooses in advance precisely one of those batches to be ``special". The special batch behaves slightly different from the other batches. In the standard batches, all experts have probability $1/2$ to choose the correct label in every round. The special batch, however, contains a special expert $e^\star$, for which the adversary makes sure that in every round, the arm advised by $e^\star$ is chosen to be correct with probability $1/2+\epsilon$, where $\epsilon$ is some small parameter. The point here is that a learner who wishes to minimize regret essentially competes with this special expert. Therefore, it must identify the special expert with high probability, and then repeatedly copy its advice. We show that the bottleneck of this process is the identification of the special batch. This gives the reduction from SBI to BwE in our setting.
\subsubsection{Step 2: The one-batch SBI game}
Intuitively, since every batch functions as a separate ``world", in order to find the special batch, the learner has no choice but to go through the batches one by one and query them for some time, until it finds the special batch. Therefore, the learner essentially must solve $\Omega(k)$ one-batch SBI instances in expectation, until it finds the special batch (In a one-batch SBI game, the learner should decide for a given batch, if it is special or not). 

Since in every batch, and in every round, precisely one arm is correct, the special case of a single batch is a full-information problem: the learner receives all relevant information in every round, no matter which of the two arms it picks. This makes the problem reasonable to analyze: we need to compare between the distributions on the input generated by the adversary when the batch is special, and when it is not special. This was done by \cite{ito2024minimax}, by bounding the KL-divergence between the two distributions. We use the same bound in our proof as well. This bound shows that as long as the learner sees the information produced by the adversary for fewer than $c \frac{\log n}{\epsilon^2}$ many rounds (for some constant $c$), the cases where the batch is special or not are essentially indistinguishable.

\subsubsection{Step 3: From the one-batch SBI game to the general SBI game} \label{sec:sketch-one-to-many}
As mentioned before, the adversary can force the learner to solve $\Omega(k)$ many one-batch instances until it identifies the special batch. Therefore, the learner must run for $\Omega\mleft( k \frac{\log n}{\epsilon^2} \mright)$ many rounds in order to identify the special batch with high probability.

\subsubsection{Step 4: From the general SBI game lower bound to a BwE lower bound}
Fix $K,N,T$. We assume that for the setting we have defined in Section~\ref{sec:sketch-1}, there exists a learner $A$ with expected regret at most $O(\epsilon T)$. By Step 1, this intuitively means that $A$ must have (with high probability) identified the special batch at some round $T' < T$, and it is imitating the predictions of the special expert since then. However, from Step 3, we know that to identify the special batch, $A$ must run for at least $T' = \Omega \mleft(k \frac{\log n}{\epsilon^2} \mright)$ many rounds. Therefore, for small enough $\epsilon$, we get $T' > T$, which means that the special batch is not identified, and every learner has expected regret at least $\Omega(\epsilon T)$. Choosing $\epsilon = \Theta\mleft( \sqrt{\frac{k \log n}{T}} \mright)$ is small enough to get $T' > T$, and the lower bound $\Omega(\epsilon T) = \Omega \mleft( \sqrt{T K \log \frac{N}{K}} \mright)$ is implied.


\subsection{Key differences from the proof of \cite{ito2024minimax}} \label{sec:proof-differences}
We prove the same lower bound that \cite{ito2024minimax} proved against a restricted learner whose predictions in rounds $t$ are independent of the expert advice of round $t$ (that is, against a proper learner). While our proof is based on the ideas of \cite{ito2024minimax}, there are a few key differences, allowing us to prove the same lower bound against a non-restricted learner. Below, we briefly describe the main differences between our proof and the proof of \cite{ito2024minimax}.
\begin{enumerate}
    \item To make Step 3 possible, we have to make sure that the $k$ ``one-batch" problems are ``identical but independent" even when the learner is improper (not restricted as in \cite{ito2024minimax}). This forces the learner to solve from scratch at least an $\Omega(k)$ one-batch problems in order to identify the special batch. To implement it, we adapt the expert advice to the learner's choices of arms. This is where we use the adaptiveness of the adversary, which is not used in \cite{ito2024minimax}. The formal implementation of the adversary is described in Section~\ref{sec:restricted-step0}. 
    \item In \cite{ito2024minimax}, the reduction to BwE (Step 1) is proved from the more standard problem of \emph{Best Expert Identification} (BEI), in which a specific special expert needs to be identified, not only the batch containing it, as in SBI. However, the adaptive adversary we have defined to make Step 3 feasible, makes such a reduction infeasible, and therefore we prove the reduction from the seemingly easier SBI problem, and for an improper learner. Nevertheless, we observe that SBI is still difficult enough, and the lower bound proved in \cite{ito2024minimax} for BEI holds for SBI as well. This is done in the formal proof of Step 2 (Section~\ref{sec:two-batch-step2}).
\end{enumerate}


\subsection{Rest of the paper organization}
The rest of the paper (except for Section~\ref{sec:future} which discusses future work), is dedicated to the formal proof. To avoid any confusion, we stress here that in the formal proof, we add another ``dummy batch", which is not mentioned in the proof sketch. Therefore, in the formal proof, the ``one-batch" instance of the problem mentioned in the above sketch is referred to as the ``two-batch" instance (including the dummy batch). The exact construction is provided in the following section.

\section{A reduced setting of BwE (Step 0)} \label{sec:restricted-step0}
To prove the lower bound, we consider only the following set of instances of the problem. We assume that $k= (K-1)/2 \in \mathbb{N}$, and that $n=(N-1)/k \in \mathbb{N}$. Note that for the lower bound that we prove, this assumption is without loss of generality: for other values of $N,K$, we may simply use only the maximal number of arms $K'$ and experts $N'$ that fits the above requirements, and let the other arms and experts always suffer loss $1$. This will result in the same bound as if $N,K$ fit the above requirements, up to a multiplicative constant.

Under the assumption above, we may partition $N-1$ of the experts to $k$ disjoint batches, each of size $n$. The experts in the $u$'th batch (where $u \in [k]$) will only set their advice to one of two arms: $(u,0)$ or $(u,1)$. Each of the $N-1$ experts in the $k$ batches is identified by a pair $(u,v)$ where $u\in[k]$ identifies the batch, and $v \in [n]$ identifies the index of the expert inside the batch. Note that we have one unused expert and one unused arm. We identify this expert with $0$ and this arm also with $0$. Expert $0$ can only set its advice to $0$.

Within this reduced setting, we only consider the following possible strategies of the adaptive adversary.
\paragraph{Expert advice assignment.}
We denote the advice in round $t$ (of all experts other than $0$) by a vector $A_t = (a_t^{(1)},...,a_t^{(k)})$, where $a_t^{(u)}$ is a binary vector of length $n$ setting the advice of batch $u$ in the natural way: the $v$'th index of $a_t^{(u)}$, denoted $a_t^{(u,v)}$ sets the right value of $e_t(u,v)$. In all the adversarial strategies that we consider, the expert advice $A_t$ of each round $t$ is chosen as follows.

\begin{enumerate}
    \item Initialize $A_t$ by choosing every entry iid from $\Ber(1/2)$.
    \item For round $t = 1,2, \ldots$:
    \begin{enumerate}
        \item Use the advice $A_t$ for the course of this round. 
        \item Upon the choice of $I_t$ by the learner, for each $u \in [k]$:
        \begin{enumerate}
            \item If $I_t \in \{(u,0),(u,1)\}$, draw a binary vector $b$ of length $n$, each entry iid from $\Ber(1/2)$, and set $a_{t+1}^{(u)} = b$.
            \item Otherwise, set $a_{t+1}^{(u)} = a_{t}^{(u)}$.
        \end{enumerate}
    \end{enumerate}
\end{enumerate}

In simple words, the advice of every batch $u \in [k]$ is initialized by a vector of entries drawn iid from $\Ber(1/2)$. Only after rounds where an arm of batch $u$ is pulled, new expert advice is picked for this batch $u$, again by a vector of entries drawn iid from $\Ber(1/2)$. After rounds in which an arm of $u$ is not pulled, the same advice is reused for the next round.

\paragraph{Loss assignments.}
For any fixed $0 < \epsilon \leq 0.1$, we define $N$ different randomized loss assignment strategies that depend on $\epsilon$. Formally, a strategy is a randomized algorithm that the adversary uses to draw the expert advice and the loss of each arm. In the reduced setting that we consider, the adversary may choose precisely one of those strategies to be used throughout the entire game. The learner does not know which strategy is chosen by the adversary. If an adversary follows strategy $S$, it is called an $S$-adversary.
Fix $0 < \epsilon \leq 0.1$. 
In all strategies we define, the loss of all arms is always taken from $\{0,1\}$. Furthermore, for any batch $u \in [k]$ and any round $t$, $\ell_t(r)$ equals $0$ for precisely one $r \in \{(u,0), (u,1)\}$. We refer to this arm as the \emph{correct arm} in  batch $u$ and round $t$.
Let us define the strategies. First, any of the strategies we define draws $\ell_t(0) \sim \Ber(1/2-\epsilon/2)$. We describe the differences between the strategies below.
\begin{itemize}
    \item The first strategy is called $S_0$. An $S_0$-adversary draws $\ell_t(u,0) \sim \Ber(1/2)$ for all $u \in [k],t \in [T]$. Note that this determines $\ell_t(u,1)$ for all $u \in [k]$ and all $t \in [T]$.
    \item The other $N-1$ strategies are $\{S_{(u,v)}\}_{u \in [k], v \in [n]}$. An $S_{(u^\star,v^\star)}$-adversary draws $\ell_t(u,0) \sim \Ber(1/2)$ for all $t$, and for all $u \in [k] \backslash \{u^\star\}$. It draws $\ell_t(u^\star,a_t^{(u^\star, v^\star)}) \sim \Ber(1/2-\epsilon)$.
\end{itemize}

In simple words, in $S_0$ all experts are equally handled, and $S_{(u^\star,v^\star)}$ gives a slight advantage to the predictions of $(u^\star,v^\star)$.

\section{A reduction from \emph{Special Batch Identification} (SBI) to BwE (Step 1)} \label{sec:reduction-step1}
We say that $u \in [k]$ is the \emph{special batch} if there exists $v \in [n]$ such that the adversarial strategy is $S_{u,v}$. If there is no such $u$, then $0$ is the special batch. Note that in our reduced setting, there is always precisely one special batch.

In this paper, we lower bound the \emph{pseudo-regret} of the learner (which can only be smaller than the expected regret):
\[
    R_T = \max_{j^\star \in [N]} \mathbb{E} \mleft[ \sum_{t=1}^T \ell_t(I_t) - \sum_{t=1}^T \ell_t(e_t(j^\star)) \mright].
\]
Further discussion on standard measures of regret in bandit problems may be found in \cite{bubeck2012regret}.
Note that from linearity of expectation, the pseudo-regret against any fixed expert $j \in [N]$ can be seen as a sum of $T$ \emph{per-round} pseudo-regret values, the $t^{th}$ of which is $\mathbb{E}[\ell_t(I_t) - \ell_t(e_t(j))]$.

In this section, we prove that in our reduced setting, any algorithm achieving pseudo-regret at most $\epsilon T /1000$, must identify the special batch with high probability. Note that this rather is intuitive: a learner who does not know which batch is the special one, and thus keeps pulling arms not from the special batch in at least, say, half of the rounds, will suffer at least $\epsilon/2$ per-round pseudo-regret against the best expert in at least half of the rounds, resulting in at least $\epsilon T /4$ cumulative pseudo-regret. 

We call this problem of identifying the special batch \emph{Special Batch Identification (SBI)}.
Formally, an algorithm $A'$ for SBI is an algorithm which is being executed in the same setting as BwE. The difference is that the goal of $A'$ is different from the goal of an algorithm for BwE. The goal of $A'$ is to identify the special batch with high certainty, after running for the least possible number of rounds. That is, in SBI, the algorithm may stop the game at any point, and it is required to output a prediction of the special batch's identity that is correct with high probability. In particular, it does not make any predictions or suffer any loss; it only chooses an arm to pull in each round, in a way that maximizes output accuracy, in the least possible number of rounds. Therefore, note that:
\begin{enumerate}
    \item Expert $0$ and arm $0$ behave the same under all strategies, and thus its per-round expected loss is known to always be $1/2-\epsilon/2$. That is, pulling $0$ does not provide any information required to estimate if $0$ is the special batch or not. We can thus assume w.l.o.g that $A'$ never pulls $0$.
    \item We can say that $A'$ ``pulls a batch" $u$ instead of a specific arm $(u,0)$ or $(u,1)$, as pulling $(u,0)$ or $(u,1)$ gives $A'$ the exact same information. Indeed, $\ell_t(u,0) = 1- \ell_t(u,1)$.
\end{enumerate}
We say that $A'$ is \emph{good} if for every possible strategy from our pool, $A'$ outputs the correct special batch with probability at least $0.95$. We denote by $T(A',S)$ the expected number of rounds $A'$ is executed before outputting its estimate, when the adversarial strategy is $S$. The expectation is taken over the randomness of $A'$, as well as the randomness of $S$.

\begin{lemma} \label{lem:reduction}
    Suppose that $A$ is an algorithm for BwE, such that for any $S$ from our pool of strategies, the pseudo-regret of $A$ is bounded by $R_T \leq r(T)$ for all $T$. Let $T^\star \geq 1000r(T^\star)/\epsilon$. Then, there exists a good algorithm $A'$ for SBI such that for any strategy $S$ from our pool, $T(A',S) \leq T^\star$.
\end{lemma}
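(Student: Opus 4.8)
The plan is to build the SBI algorithm $A'$ by \emph{simulating} $A$: run $A$ for exactly $T^\star$ rounds, in each round pulling whatever arm $I_t$ the simulated copy of $A$ selects, and feeding back to $A$ the advice vector $A_t$ together with the single observed loss $\ell_t(I_t)$. Because the adversary's advice update (Section~\ref{sec:restricted-step0}) depends only on \emph{which batch} is pulled, and because SBI reveals to $A'$ the full information of the pulled batch (of which the single loss observed by $A$ is a part), this simulation is faithful: the trajectory of $A$ inside $A'$ has exactly the same law as $A$ playing directly against the $S$-adversary. After $T^\star$ rounds, $A'$ counts, for each $u\in[k]$, the number $N_u$ of rounds in which an arm of batch $u$ was pulled; it outputs $\argmax_{u\in[k]} N_u$ if this maximum exceeds $T^\star/2$, and outputs $0$ otherwise. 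Since $A'$ always halts after $T^\star$ rounds, $T(A',S)=T^\star$ automatically, so the entire content is to show that $A'$ is good.

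The heart of the argument turns the regret bound into a bound on how long $A$ stays outside the special batch. Fix a strategy $S_{(u^\star,v^\star)}$ and let $m$ be the (random) number of rounds in which $A$ pulls either arm $0$ or a non-special batch $u\neq u^\star$, so that $N_{u^\star}=T^\star-m$. Using the expert $(u^\star,v^\star)$, whose advised arm has expected per-round loss $1/2-\epsilon$, as the comparator in the $\max$ defining $R_{T^\star}$, the hypothesis gives $\mathbb{E}\mleft[\sum_t \ell_t(I_t)\mright]\le T^\star(1/2-\epsilon)+r(T^\star)$. On the other hand I will lower bound the expected loss region by region: a pull of arm $0$ costs $1/2-\epsilon/2$, a pull of a non-special batch costs exactly $1/2$ (both of its arms are $\Ber(1/2)$), and a pull of the special batch costs at least $1/2-\epsilon$. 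Writing each coefficient as $1/2-\epsilon$ plus a nonnegative surplus ($\epsilon/2$ for arm $0$, $\epsilon$ for a non-special batch, $0$ for the special batch), summing over rounds yields $\mathbb{E}\mleft[\sum_t \ell_t(I_t)\mright]\ge T^\star(1/2-\epsilon)+(\epsilon/2)\,\mathbb{E}[m]$. Comparing the two estimates gives $(\epsilon/2)\,\mathbb{E}[m]\le r(T^\star)$, i.e.\ $\mathbb{E}[m]\le 2r(T^\star)/\epsilon\le T^\star/500$, where the last step uses $T^\star\ge 1000\,r(T^\star)/\epsilon$.

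The step I expect to be the crux is the per-round lower bound $1/2-\epsilon$ on the loss \emph{inside} the special batch, because there the learner sees the full advice vector $A_t$, which already \emph{contains} the informative bit $a_t^{(u^\star,v^\star)}$. The point is that the correct arm, with index $c_t\in\{0,1\}$, is generated as $c_t=a_t^{(u^\star,v^\star)}$ with probability $1/2+\epsilon$ and is otherwise flipped by an independent fresh coin, so that conditioned on $a_t^{(u^\star,v^\star)}$ it is independent of everything the learner has observed. Hence for the within-batch index $w_t\in\{0,1\}$ chosen by the learner (a function of its observations), $\Pr[w_t=c_t]=(1/2-\epsilon)+2\epsilon\,\Pr\mleft[w_t=a_t^{(u^\star,v^\star)}\mright]\le 1/2+\epsilon$, so the expected loss $\Pr[w_t\neq c_t]\ge 1/2-\epsilon$ no matter how cleverly $A$ has tried to locate $v^\star$ from the past. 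An entirely analogous computation handles $S_0$: taking expert $0$ (per-round loss $1/2-\epsilon/2$) as the comparator, and using that every batch pull costs $1/2$ while arm $0$ costs $1/2-\epsilon/2$, one obtains $\mathbb{E}\mleft[\sum_{u\in[k]}N_u\mright]=\mathbb{E}[T^\star-N_0]\le 2r(T^\star)/\epsilon\le T^\star/500$.

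Finally I convert these expectation bounds into the required $0.95$ success probability via Markov's inequality. Under $S_{(u^\star,v^\star)}$, $\Pr[m\ge T^\star/2]\le (T^\star/500)/(T^\star/2)<0.05$, and on the complementary event $N_{u^\star}=T^\star-m>T^\star/2$ while every other batch has at most $m<T^\star/2<N_{u^\star}$ pulls, so $A'$ outputs $u^\star$. Under $S_0$, $\Pr\mleft[\sum_{u\in[k]}N_u\ge T^\star/2\mright]<0.05$, and on its complement no batch reaches the threshold $T^\star/2$, so $A'$ outputs $0$. In either case $A'$ is correct with probability at least $0.95$, hence good, which completes the reduction.
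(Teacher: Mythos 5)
Your proposal is correct and follows essentially the same route as the paper: run $A$ for $T^\star$ rounds, convert the pseudo-regret bound into a bound on the expected number of rounds spent outside the special batch via the per-round gaps ($\epsilon/2$ for arm $0$, $\epsilon$ for a wrong batch, $\geq 0$ inside the special batch), and finish with Markov's inequality on the pull counts. The only differences are cosmetic (a thresholded-argmax output rule with threshold $T^\star/2$ instead of the paper's plain argmax with threshold $0.75T^\star$) plus a welcome extra detail: you explicitly verify, via the conditioning argument on $a_t^{(u^\star,v^\star)}$, that even an improper learner seeing the current round's advice has expected loss at least $1/2-\epsilon$ inside the special batch, a point the paper uses implicitly.
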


\begin{proof}
    Let us define a good algorithm $A'$ with $T(A',S) \leq T^\star$ for all $S$. $A'$ executes $A$ for $T^\star$ many rounds. For any $u \in [k] \cup\{0\}$, let $T_u$ be the number of rounds where $A$ pulls an arm of batch $u$. $A'$ outputs $\argmax_{u \in [k] \cup \{0\}} T_u$. Clearly, $T(A',S) \leq T^\star$ for all $S$. It remains to show that $A'$ is good. Let $u^\star$ be the special batch. We show that $T_{u^\star}$ is likely to be very large compared to other $T_u$'s:
    \begin{align*}
        \Pr[T_{u^\star} < 0.75 T^\star] &= \Pr[T^\star - T_{u^\star} \geq 0.25T^\star] \\
                                        & \leq \frac{4 \mathbb{E}[T^\star - T_{u^\star}]}{T^\star} \tag{Markov}\\
                                        & \leq \frac{4}{T^\star} \cdot \frac{2 R_{T^\star}}{\epsilon} \tag{$R_{T^\star} \geq \frac{\epsilon}{2} \mathbb{E}[T^\star - T_{u^\star}]$} \\
                                        & \leq \frac{4}{T^\star} \cdot \frac{2 r(T^\star)}{\epsilon} \\
                                        &\leq  \frac{4}{T^\star} \cdot \frac{2 \frac{T^\star \epsilon}{1000}}{\epsilon} \\
                                        &\leq 0.01.
    \end{align*}
    To see that $R_{T^\star} \geq \frac{\epsilon}{2} \mathbb{E}[T^\star - T_{u^\star}]$, note that if the special batch is $0$, then the per-round pseudo-regret of any arm other than $0$ against expert $0$ is exactly $\epsilon/2$. If the special batch is some $u\in [k]$, then when pulling an arm of some other $u' \in [k]$ the per-round pseudo-regret against the special expert is exactly $\epsilon$, and when pulling $0$, the per-round pseudo-regret against the special expert is exactly $\epsilon/2$.

    Note that $T_{u^\star} \geq 0.75 T^\star$ implies $u^\star = \argmax_{u \in [k] \cup \{0\}} T_u$. Therefore, $A'$ outputs $u^\star$ with probability at least $0.99$, as required.
\end{proof}

\section{Lower bound for the two-batch game (Step 2)} \label{sec:two-batch-step2}
In this section we consider the case $K=3$, in which $k=1$, and we only have the batch containing only $0$, and another single batch. We prove a lower bound on $T(A', S_0)$, for any good algorithm $A'$ for SBI.
First, observe that in the case $k=1$, the learner is fixed to always choose batch $1$.
Therefore, for any adversarial strategy, the learner is completely passive throughout the game. Its only choice is for how many rounds to play, and which batch, $0$ or $1$, to output at the end of the game. Therefore, for any fixed strategy, the adversary draws the expert advice and arm losses from the same distribution in all rounds. For a strategy $S_\star$, let $P_\star$ be the distribution from which the adversary pulls the advice and losses in every round when the strategy is $S_\star$.
For any distribution $P$, let $P^T$ be $T$ iid draws from $P$.
So, if the adversary uses strategy $S_\star$ and it is given that the learner runs for $T$ many rounds, then the distribution over all expert advice and losses throughout the game is $P_{\star}^T$.
Since the learner depends only on the observed advice and losses, $P_{\star}^T$ induces a distribution on the learner's choice of good batch, when running for $T$ many rounds against an $S_\star$-adversary.
Let $S_{mix^T}$ be the adversarial strategy where the adversary draws in advance $v$ from $[n]$ uniformly at random, and then uses the strategy $S_{(1,v)}$ for $T$ many rounds. 
Likewise, denote $P_{mix^T} = \frac{1}{n} \sum_{v \in [n]} P_{(1,v)}^T$.
In all notation defined above, when $T$ is replaced with $\infty$, this is interpreted as a product of unbounded length. We denote the KL-divergence between two distributions $P,Q$ by $\KL(P || Q)$.

We begin with the following lemma, which was proved in \cite{ito2024minimax} for equivalent distributions that were defined slightly differently in \cite{ito2024minimax}. For completeness of the paper, we provide the proof below.
\begin{lemma}[\cite{ito2024minimax}] \label{lem:kl-bound}
    If $0 < \epsilon \leq 0.1$, then for any $T \geq 1, n \geq 1$:
    \[
    \KL(P_{mix^T} || P^T_0) \leq \frac{(1 + 4 \epsilon^2)^T -1}{n}.
    \]
\end{lemma}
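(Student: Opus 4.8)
The plan is to use the standard reduction from KL-divergence to $\chi^2$-divergence, and then exploit the product structure of the distributions to reduce the whole computation to a single round, where everything becomes an elementary parity calculation. To lighten notation write $P_v := P_{(1,v)}$ for the single-round distribution under strategy $S_{(1,v)}$.

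First I would make the single-round distributions explicit. Since in the two-batch game ($k=1$) the learner always pulls batch $1$, in every round it observes the advice vector $A \in \{0,1\}^n$ of batch $1$ (entries iid $\Ber(1/2)$) together with the correct arm $C \in \{0,1\}$ (equivalently the full loss pair, as $\ell_t(1,0)=1-\ell_t(1,1)$). The loss $\ell_t(0)$ is drawn from the same $\Ber(1/2-\epsilon/2)$ independently under every strategy, so it contributes a common independent factor that cancels in all the ratios below and can be dropped. Thus under $S_0$ we have $P_0(A,C)=2^{-(n+1)}$, while under $S_{(1,v)}$ we have $P_v(A,C)=2^{-n}\ml(1/2+\epsilon\,\sigma_v\mr)$, where $\sigma_v=+1$ if $C=A_v$ and $\sigma_v=-1$ otherwise.

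Next I would pass from KL to $\chi^2$. By Jensen's inequality (concavity of $\log$) and $\log(1+x)\le x$,
\[
\KL(P_{mix^T} || P_0^T) \le \log \Ex_{X \sim P_{mix^T}}\!\ml[\frac{P_{mix^T}(X)}{P_0^T(X)}\mr] = \log\!\ml(1 + \chi^2(P_{mix^T} || P_0^T)\mr) \le \chi^2(P_{mix^T} || P_0^T).
\]
Writing $P_{mix^T}=\frac1n\sum_v P_v^T$ and using the identity $1+\chi^2 = \sum_X (P_{mix^T}(X))^2/P_0^T(X)$, the square of the mixture produces a double sum over pairs $(v,w)$, and the product structure of the $P_\star^T$ lets each term factor across the $T$ iid rounds:
\[
1 + \chi^2(P_{mix^T} || P_0^T) = \frac{1}{n^2}\sum_{v,w}\prod_{t=1}^T \sum_{(A,C)} \frac{P_v(A,C)\,P_w(A,C)}{P_0(A,C)} = \frac{1}{n^2}\sum_{v,w} \rho_{v,w}^T,
\]
where $\rho_{v,w} := \sum_{(A,C)} P_v(A,C)P_w(A,C)/P_0(A,C)$ is a single-round quantity.

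The crux is computing $\rho_{v,w}$, and I expect the parity/orthogonality step to be the main obstacle to phrase cleanly. Substituting the formulas gives $\rho_{v,w}=2^{-n+1}\sum_{A,C}\ml(1/2+\epsilon\sigma_v\mr)\ml(1/2+\epsilon\sigma_w\mr)$. Expanding the product, the constant term integrates to $1$; the linear terms vanish because for fixed $A$ the sum of $\sigma_v$ over $C\in\{0,1\}$ is $0$; and the quadratic term equals $\epsilon^2\sigma_v\sigma_w$ with $\sigma_v\sigma_w=(-1)^{A_v\oplus A_w}$. When $v\ne w$, summing $(-1)^{A_v\oplus A_w}$ over $A\in\{0,1\}^n$ gives exactly $0$, so $\rho_{v,w}=1$; when $v=w$, $\sigma_v^2=1$ and the quadratic term survives, giving $\rho_{v,v}=1+4\epsilon^2$. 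This orthogonality across distinct experts is precisely what yields the crucial $1/n$ factor. Plugging these values into the display gives
\[
1 + \chi^2 = \frac{1}{n^2}\Big( n\,(1+4\epsilon^2)^T + (n^2-n)\Big) = 1 - \frac{1}{n} + \frac{(1+4\epsilon^2)^T}{n},
\]
so $\chi^2 = \frac{(1+4\epsilon^2)^T-1}{n}$, and combining with the bound $\KL \le \chi^2$ from above completes the proof.
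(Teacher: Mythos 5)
Your proposal is correct and is essentially the paper's argument in a different wrapper: the overlap quantities $\rho_{v,w}^T$ you compute are exactly the cross-moments $\mathbb{E}_{g \sim P_{(1,v^\star)}^T}\left[ p_v(g)/p_0(g) \right]$ that the paper evaluates (cross terms equal to $1$, diagonal terms equal to $(1+4\epsilon^2)^T$), and both proofs finish with Jensen's inequality plus $\ln(1+x) \le x$. Your only deviation is cosmetic: you apply Jensen to the whole mixture at once, packaging the bound as $\KL \le \ln(1+\chi^2) \le \chi^2$ with an exact single-round $\chi^2$ evaluation via parity orthogonality, whereas the paper keeps the average over the mixture index $v^\star$ outside the logarithm; since the inner quantity does not depend on $v^\star$, the two routes yield identical final bounds.
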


\begin{proof}
    By definitions of $P_{mix^T}, P^T_0$ , we have
    \[
        \KL(P_{mix^T} || P^T_0) = \KL \mleft(\frac{1}{n} \sum_{v \in [n]} P_{(1,v)}^T || P^T_0 \mright)
    \]
    Let $p_v$ and $p_0$ be the probability mass functions for $P_{(1,v)}^T$ and $P^T_0$, respectively.  By definition of the KL-divergence and linearity of expectation, the above is equal to
    \begin{equation} \label{eq:kl-bound}
    \frac{1}{n} \sum_{v^\star \in [n]} \mathbb{E}_{g \sim P_{(1,v^\star)}^T} \mleft[ \ln \mleft( \frac{1}{n} \sum_{v \in [n] } \frac{p_v(g)}{p_0(g)} \mright) \mright]
    \leq
    \frac{1}{n} \sum_{v^\star \in [n]} \ln \mleft( \frac{1}{n} \sum_{v \in [n]}  \mathbb{E}_{g \sim P_{(1,v^\star)}^T} \mleft[ \frac{p_v(g)}{p_0(g)}  \mright] \mright),
    \end{equation}
    where the inequality is due to Jensen's inequality combined with concavity of $\ln$, and linearity of expectation. It remains to calculate the expectation $ \mathbb{E}_{g \sim P_{(1,v^\star)}^T} \mleft[ \frac{p_v(g)}{p_0(g)}  \mright]$ for every $v,v^\star$. Fix $g$, and for every round $t$, let $c_t \in \{0,1\}$ such that in round $t$, the correct arm is $(1, c_t)$. For every round $t$ and $v \in [n]$, define $c_{t,v} = 1[e_t(v) = (1,c_t)]$. Now, note that:
    \[
    \frac{p_v(g)}{p_0(g)} = \prod_{t \in [T]} \frac{(1/2)^n (1/2 + (2c_{t,v} - 1) \epsilon)}{(1/2)^{n+1}}
    = \prod_{t \in [T]} (1 + (2c_{t,v} - 1)2\epsilon).
    \]

    Therefore, if $v \neq v^\star$ then:
    \[
    \mathbb{E}_{g \sim P_{(1,v^\star)}^T} \mleft[ \frac{p_v(g)}{p_0(g)}  \mright]
    =
    \prod_{t \in [T]} \mathbb{E}_{c_{t,v} \sim \Ber(1/2)}  \mleft[1 + (2c_{t,v} - 1)2\epsilon \mright] = 1.
    \]
    Otherwise, if $v=v^\star$ then:
    \[
    \mathbb{E}_{g \sim P_{(1,v^\star)}^T} \mleft[ \frac{p_v(g)}{p_0(g)}  \mright]
    =
    \prod_{t \in [T]} \mathbb{E}_{c_{t,v} \sim \Ber(1/2 + \epsilon)}  \mleft[1 + (2c_{t,v} - 1)2\epsilon \mright] = (1 + 4\epsilon^2)^T.
    \]
    Plugging this into the RHS of \eqref{eq:kl-bound} which upper bounds $\KL(P_{mix^T} || P^T_0)$, we obtain:
    \begin{align*}
            \KL(P_{mix^T} || P^T_0)
            &\leq
            \frac{1}{n} \sum_{v^\star \in [n]} \ln \mleft( \frac{n-1 + (1+4\epsilon^2)^T}{n} \mright) \\
            &=  \ln \mleft( 1 + \frac{(1+4\epsilon^2)^T -1}{n} \mright) \\
            & \leq \frac{(1+4\epsilon^2)^T -1}{n},
    \end{align*}
    as claimed.
\end{proof}

We now prove an upper bound on $T(A', S_0)$ for the two-batch case ($k=1$) that holds under the assumption that $A'$ is good for any strategy $S$ from our pool of strategies.

\begin{lemma} \label{lem:one-batch-lower-bound}
    Suppose that $k=1$, $0 < \epsilon \leq 0.1$, and $n \geq 10$. Let $A'$ be good. Then $T(A',S_0) \geq T^\star/2 := \frac{\ln (n/10)}{8 \epsilon^2}$.
\end{lemma}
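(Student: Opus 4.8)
The plan is to view the two-batch game, in which the learner is completely passive, as a sequential hypothesis test between the null strategy $S_0$ and the mixed alternative $S_{mix^\infty}$ (draw $v \sim [n]$ uniformly, then play $S_{(1,v)}$ for as long as the learner runs), and to show that distinguishing them with the accuracy forced by goodness requires $\Omega(\log(n)/\epsilon^2)$ rounds. Write $\tau$ for the data-dependent number of rounds $A'$ plays and $\hat u \in \{0,1\}$ for its output. Goodness supplies the two facts I will use: under $S_0$ the correct answer is $0$, so $\Pr_{S_0}[\hat u = 1] \le 0.05$; and for every $v$ the correct answer is $1$, so averaging over $v$ gives $\Pr_{S_{mix^\infty}}[\hat u = 1] \ge 0.95$. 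I argue by contradiction: assuming $T(A',S_0) = \mathbb{E}_{S_0}[\tau] < T^\star/2 = \tfrac{\ln(n/10)}{8\epsilon^2}$, I build a fixed-horizon test separating $S_0$ from $S_{mix^\infty}$ too sharply for the bound of Lemma~\ref{lem:kl-bound} to permit.

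Concretely, I set the horizon $T^\star = \tfrac{\ln(n/10)}{4\epsilon^2}$, chosen precisely so that $(1+4\epsilon^2)^{T^\star} \le e^{4\epsilon^2 T^\star} = n/10$; Lemma~\ref{lem:kl-bound} then yields $\KL(P_{mix^{T^\star}} \| P_0^{T^\star}) \le \tfrac{n/10 - 1}{n} < \tfrac{1}{10}$. I define a truncated decision $\hat u^{\mathrm{trunc}}$ that runs $A'$ for $\min(\tau, T^\star)$ rounds, outputs $\hat u$ if $\tau \le T^\star$, and defaults to $0$ otherwise; this is a function of the first $T^\star$ observations, whose law is $P_0^{T^\star}$ under $S_0$ and $P_{mix^{T^\star}}$ under $S_{mix^\infty}$. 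The contradiction hypothesis and Markov's inequality give $\Pr_{S_0}[\tau > T^\star] < \tfrac{1}{2}$. Since $\{\tau > T^\star\}$ is measurable with respect to the first $T^\star$ observations, Pinsker's inequality applied to the KL bound ($d_{TV} \le \sqrt{\KL/2} < \sqrt{1/20}$) transfers this tail bound to the alternative: $\Pr_{S_{mix^\infty}}[\tau > T^\star] < \tfrac{1}{2} + \sqrt{1/20}$.

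Finally I estimate the two tails of $\hat u^{\mathrm{trunc}}$. On one side, $\Pr_{S_0}[\hat u^{\mathrm{trunc}} = 1] \le \Pr_{S_0}[\hat u = 1] \le 0.05$. On the other, $\Pr_{S_{mix^\infty}}[\hat u^{\mathrm{trunc}} = 1] \ge \Pr_{S_{mix^\infty}}[\hat u = 1] - \Pr_{S_{mix^\infty}}[\tau > T^\star] \ge 0.95 - (\tfrac{1}{2} + \sqrt{1/20}) > 0.22$. Because $\hat u^{\mathrm{trunc}}$ is a (randomized) function of the length-$T^\star$ observation block, the data-processing inequality for KL divergence gives $\KL(\Ber(\Pr_{S_{mix^\infty}}[\hat u^{\mathrm{trunc}} = 1]) \| \Ber(\Pr_{S_0}[\hat u^{\mathrm{trunc}} = 1])) \le \KL(P_{mix^{T^\star}} \| P_0^{T^\star}) < \tfrac{1}{10}$. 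Monotonicity of the binary relative entropy then forces $\KL(\Ber(0.22) \| \Ber(0.05)) \le \tfrac{1}{10}$, which is false since the left-hand side exceeds $0.17$; this is the contradiction. The one delicate point, and the main obstacle, is the random stopping time: the statement concerns $\mathbb{E}_{S_0}[\tau]$, yet the accuracy requirement only bites under the alternative $S_{mix^\infty}$, so the crux is to push the Markov tail bound from $S_0$ to $S_{mix^\infty}$ (via Pinsker) and then close the gap with the sharper binary-KL data-processing step, since the cruder total-variation bound alone does not separate the probabilities enough to contradict Lemma~\ref{lem:kl-bound}.
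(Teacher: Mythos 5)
Your proof is correct, and it shares the paper's skeleton: the same horizon $T^\star = \frac{\ln(n/10)}{4\epsilon^2}$ chosen so that Lemma~\ref{lem:kl-bound} gives $\KL(P_{mix^{T^\star}} \| P_0^{T^\star}) < 1/10$, the same use of goodness under both $S_0$ and the uniform mixture, and the same key observation that $\{\tau \le T^\star\}$ and the truncated output are determined by the first $T^\star$ observations. Where you diverge is the endgame. The paper argues directly, with no contradiction, no Markov, and no binary-KL step: setting $E = \{\tau \le T^\star \wedge \hat u = 0\}$, a single application of Pinsker gives $P_0^{T^\star}[E] \le P_{mix^{T^\star}}[E] + 1/4 \le 0.05 + 0.25$, goodness under $S_0$ gives $P_0[\tau \le T^\star \wedge \hat u \neq 0] \le 0.05$, and hence $P_0[\tau > T^\star] \ge 0.65$, which yields $\mathbb{E}_{S_0}[\tau] \ge 0.65\, T^\star \ge T^\star/2$. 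Your route — Markov under $S_0$, transferring the tail bound to the mixture via Pinsker, then closing with the data-processing inequality for binary KL (using $\KL(\Ber(0.22)\|\Ber(0.05)) > 0.17 > 1/10$) — is valid; in particular it is legitimate to use both Pinsker and the DPI as separate consequences of the one KL bound. But the binary-KL machinery is needed only because your chaining spends the total-variation budget on the tail event before comparing outputs, leaving a residual gap of $0.95 - (1/2 + \sqrt{1/20}) - 0.05 \approx 0.176 < \sqrt{1/20}$. Consequently, your closing remark that total variation alone cannot separate the probabilities is wrong as a general claim: applying Pinsker once to the single event $E$ above, whose probability exceeds $1/2 - 0.05 = 0.45$ under $S_0$ (Markov plus goodness) and is at most $0.05$ under the mixture, gives a gap of $0.40 > \sqrt{1/20} \approx 0.224$ and an immediate contradiction with no appeal to binary KL. A further small advantage of the paper's direct argument is that it establishes the stronger high-probability statement $P_0[\tau > T^\star] \ge 0.65$ on the stopping time itself, whereas your Markov-based contradiction recovers only the expectation bound asserted in the lemma.
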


\begin{proof}
    Let $T$ be a random variable counting the rounds for which $A'$ is being executed. 
    We will show that $P_0^{\infty}[T > T^\star] \geq 1/2$, which implies the claim.
    Let $\hat{J} \in \{0,1\}$ be the output of $A'$. Let $E$ be the event where $A'$ stops after $T \leq T^\star$ many rounds and outputs  $\hat{J} = 0$. 
    By Pinsker's inequality, we have that:
    \begin{align*}
        \mleft \lvert P_0^{T^\star}[E] - P_{mix^{T^\star}}[E] \mright \rvert
        &\leq
        \sqrt{\frac{1}{2}\KL(P_{mix^{T^\star}} \lVert P_0^{T^\star})} \\
        & \leq
        \sqrt{\frac{1}{2}\frac{(1 + 4 \epsilon^2)^{T^\star}-1}{n}} \tag{Lemma~\ref{lem:kl-bound}}\\
        & \leq
        \sqrt{\frac{1}{2}\frac{e^{4 \epsilon^2 T^\star}-1}{n}} \\
        & \leq 1/4.
    \end{align*}
    Note that for $P_\star$ induced by $S_\star$ from our pool, we have $P^\infty_\star (A) = P^{T^\star}_\star(A)$ for any event $A$ contained in the event $T \leq T^\star$. Indeed, determining whether $A$ occurs or not can be done after $T^\star$ draws from $P_\star$. Since $A'$ is good, we also have
    \[
    P_{mix^{T^\star}}[E] = P_{mix-\infty}[E] \leq 0.05. 
    \]
    Again since $A'$ is good, we have:
    \begin{align*}
        1- P_0^{T^\star}[E] &= P_0^{T^\star}[T > T^\star] + P_0^{T^\star}[T \leq T^\star \wedge \hat{J} \neq 0]\\
                        &=
                        P_0^{T^\star}[T > T^\star] + P_0^{\infty}[T \leq T^\star \wedge \hat{J} \neq 0] \\
                        &\leq
                        P_0^{T^\star}[T > T^\star] + 0.05.
    \end{align*}
    Now, note that $P_0^{\infty}[T > T^\star] = P_0^{T^\star}[T > T^\star]$, as the occurrence of the event $T > T^\star$ is already determined after $T^\star$ many rounds.
    Combining this identity with the three inequalities above, we obtain:
    \begin{align*}
        P_0^{\infty}[T > T^\star] &= P_0^{T^\star}[T > T^\star] \\
                              &\geq 0.95 - P_0^{T^\star}[E]\\
                              &\geq 0.95  - 1/4 - P_{mix^{T^\star}}^{T^\star}[E]\\
                              &\geq 0.95 - 1/4 - 0.05 \\
                              &= 0.65.
    \end{align*}
    This concludes the proof.
\end{proof}

\section{The general case (Step 3)} \label{sec:one-to-many}
We now bootstrap the result to the general case, where there are $k+1 \geq 2$ many batches.

\begin{lemma} \label{lem:many-batches-lower-bound}
    Let $A'$ be a good algorithm for SBI. Then $T(A', S_0) \geq k \frac{\ln(n/10)}{20 \epsilon^2}$.
\end{lemma}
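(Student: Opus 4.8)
My plan is to decompose the $k$-batch game into $k$ separate two-batch games, one per non-dummy batch, and invoke Lemma~\ref{lem:one-batch-lower-bound} for each. Run a good SBI algorithm $A'$ against the $S_0$-adversary and, as justified in Section~\ref{sec:reduction-step1}, assume it never pulls batch $0$. Writing $T_u$ for the number of rounds in which $A'$ pulls batch $u$, the total runtime is $\sum_{u\in[k]} T_u$, so $T(A',S_0)=\sum_{u\in[k]} \mathbb{E}_{S_0}[T_u]$. It therefore suffices to show, for each fixed $u\in[k]$, that $\mathbb{E}_{S_0}[T_u]\ge \frac{\ln(n/10)}{8\epsilon^2}$; summing over the $k$ batches then gives $T(A',S_0)\ge k\frac{\ln(n/10)}{8\epsilon^2}\ge k\frac{\ln(n/10)}{20\epsilon^2}$, so the slack between the two constants comfortably absorbs any bookkeeping loss.

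To bound $\mathbb{E}_{S_0}[T_u]$ I would construct, from $A'$, a good two-batch algorithm $A''$ whose expected runtime equals $\mathbb{E}_{S_0}[T_u]$; Lemma~\ref{lem:one-batch-lower-bound} then yields the per-batch bound. The algorithm $A''$ runs $A'$ inside a simulated $k$-batch world in which every batch $u'\neq u$ is generated internally, exactly following the $S_0$ construction of Section~\ref{sec:restricted-step0} (fresh $\Ber(1/2)$ advice refreshed only when $A'$ pulls that batch, and $\Ber(1/2)$ losses), while batch $u$ is fed by the genuine two-batch oracle. Each time a fresh advice vector for batch $u$ must be committed, $A''$ draws one sample $g\sim P_\star$ from the two-batch oracle, consisting of an advice vector together with its correct arm, presents the advice throughout the ensuing epoch, and releases the correct arm to $A'$ only when $A'$ actually pulls batch $u$. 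With the bookkeeping arranged so that the sample for each epoch is obtained one pull in advance (and the trailing advice, if any, drawn internally), the number of oracle rounds consumed equals exactly $T_u$. Finally $A''$ declares batch $u$ special iff $A'$ outputs $u$. Faithfulness of the simulation makes $A''$ good: against the two-batch $S_0$ oracle it reproduces $A'$ against the $k$-batch $S_0$-adversary (so $A'$ outputs $u$ with probability at most $0.05$), and against the two-batch $S_{(1,v)}$ oracle it reproduces $A'$ against the $k$-batch $S_{(u,v)}$-adversary (so the good $A'$ outputs $u$ with probability at least $0.95$).

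The crux, and where the adaptive advice refreshing of Section~\ref{sec:restricted-step0} is indispensable, is justifying that this simulation is faithful against an improper $A'$. Since $A'$ may use the round-$t$ advice of batch $u$ even on rounds where it does not pull $u$, the algorithm $A''$ must present batch $u$'s advice before the associated loss is ever revealed. This is legitimate precisely because the advice of a batch is frozen between consecutive pulls and, under every strategy in the pool, has the same marginal law $\Ber(1/2)^n$; hence the advice leaks nothing about the batch's special status until a pull exposes the correlated correct arm. Consequently each pull of batch $u$ contributes exactly one fresh $P_\star$ sample, the $k$ batches stay mutually independent under $S_0$, and no improper cross-batch strategy can amortize the $\Omega(\ln n/\epsilon^2)$ cost of ruling out any single batch. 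I expect verifying this distributional identity, namely that the joint law of everything $A'$ observes in the simulation coincides with that of the genuine $k$-batch game, to be the main technical obstacle; granting it, the per-batch bound and the summation above complete the proof.
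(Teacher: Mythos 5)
Your proposal is correct and takes essentially the same route as the paper: the paper likewise reduces to the two-batch game by having a simulator feed batch $u$ from the genuine small-instance adversary while generating all other batches internally according to the refresh-on-pull advice policy of Section~\ref{sec:restricted-step0} (so that oracle rounds consumed equal the pulls of batch $u$), maps any output other than $u$ or $0$ to $0$, and invokes Lemma~\ref{lem:one-batch-lower-bound}. The only organizational difference is that you bound $\mathbb{E}_{S_0}[T_u]$ directly for every batch $u$ and sum, whereas the paper argues by contradiction through the single batch minimizing $T_u(A',S_0)$ --- logically the same reduction (your version even gives the slightly sharper constant $k\ln(n/10)/(8\epsilon^2)$), and your handling of the trailing advice-only epoch is at the same level of rigor as the paper's own bookkeeping.
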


\begin{proof}
    Let $T_u(A',S_0)$ be the expected number of rounds where $A'$ pulls $u \in [k]$ against an $S_0$-adversary. That is, $T(A', S_0) = \sum_{u \in [k]} T_u(A',S_0)$ (from linearity of expectation, and since we assume that $A'$ never pulls $0$). Suppose towards contradiction that $T(A', S_0) \leq k \frac{\ln(n/10)}{20 \epsilon^2}$.  Therefore, there exists $u$ so that $T_u(A',S_0) \leq  \frac{\ln(n/10)}{20 \epsilon^2}$. Based on the queries of $A'$ on batch $u$, we will construct a good algorithm $B'$ for SBI for the case $k=1$, which we call the ``small instance". The instance of the problem that $A'$ solves is called the ``big instance". We stress that the adversary is unaware of $A'$, and produces input for the small instance, handled by $B'$. However, $B'$ will simulate input for $A'$, and use its decisions after adapting them to the small instance. The non-$0$ experts in the small instance will be treated as the experts of batch $u$ in the big instance. Expert $0$ in the small instance remains expert $0$ in the big instance. $B'$ operates as follows.
    \begin{enumerate}
        \item In rounds $t= 1,...$
        \begin{enumerate}
            \item $B'$ receives advice for the small instance from the adversary. 
            \item $B'$ generates the advice of all batches of the big instance, except from $0, u$, according to the policy described in Section~\ref{sec:restricted-step0}. It passes to $A'$ the advice of $0, u$ generated by the adversary, and also the self-generated advice for the other batches.
            \item \label{itm:loop-start} $B'$ checks what $A'$ wishes to do.
            \item If $A'$ queries a batch $u' \neq u$: 
            \begin{enumerate}
                \item $B'$ draws the correct arm of $u'$ according to $\Ber(1/2)$ and passes the outcome to $A'$.
                \item $B'$ generates advice for all batches of the big instance according to the policy described in Section~\ref{sec:restricted-step0}, and passes it to $A'$.
                \item Return to Item~\ref{itm:loop-start}.
            \end{enumerate}
            \item Else, if $A'$ queries $u$: pass the correct arm received from the adversary to $A'$, as the correct arm for batch $u$.
            \item If $A'$ stops, then:
            \begin{enumerate}
                \item If $A'$ outputs $u$ or $0$, $B'$ outputs the same output as $A'$ (where $u$ refers to the non-$0$ batch in the small instance).
                \item Else, if $A'$ outputs some batch other than $u$ or $0$, $B'$ outputs batch $0$.
            \end{enumerate}  
        \end{enumerate}
    \end{enumerate}

    Since the adversary must choose a strategy from the pool for the small instance, the big problem instance generated for $A'$ is also from the pool, and note that the special batch in the small and big instances is the same batch: it is one of the two batches of the small instance.
    When $A'$ stops and outputs a batch, $B'$ stops as well. If $A'$ outputs a valid batch for the small instance ($u$ or $0$), $B'$ outputs the same batch, and otherwise it outputs $0$. Since $A'$ is good, it outputs the correct batch with probability at least $0.95$, and therefore $B'$ also outputs the correct batch with probability at least $0.95$. Thus, $B'$ is good for the small instance. However, we also know that $T_u(A',S_0) <  \frac{\ln(n/10)}{20 \epsilon^2}$, and $T_u(A',S_0)$ is the expected number of rounds that $B'$ is executed for against an $S_0$-adversary. This contradicts Lemma~\ref{lem:one-batch-lower-bound}. Therefore, $T(A', S_0) \geq k \frac{\ln(n/10)}{20 \epsilon^2}$.
\end{proof}

\section{Concluding the lower bound for BwE (Step 4)}

\begin{theorem} \label{thm:main}
    Let $N,K$ be positive natural numbers such that $N \geq 2K$. Then, for any $T \geq K \ln (N/K) $, and for any BwE algorithm, there exists an adversarial strategy for which $R_T = \Omega \mleft(\sqrt{TK \log (N/K)} \mright)$.
\end{theorem}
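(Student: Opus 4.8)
The plan is to combine the reduction of Lemma~\ref{lem:reduction} with the general SBI lower bound of Lemma~\ref{lem:many-batches-lower-bound} in a single contradiction argument, tuning the advantage parameter $\epsilon$ so that the number of rounds forced by special-batch identification strictly exceeds the horizon $T$.

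First I would dispose of the edge cases via the padding reduction from the start of Section~\ref{sec:restricted-step0}: for arbitrary $N \ge 2K$ one restricts to the maximal $K' \le K$, $N' \le N$ with $k = (K'-1)/2 \in \N$ and $n = (N'-1)/k \in \N$, letting all remaining arms and experts always incur loss $1$, which changes $K$ and $N$ only by constant factors. When $N/K$ is bounded by a universal constant (so that $n$ would be too small, i.e.\ $n < 10$, and $\ln(n/10)$ is not comparable to $\ln(N/K)$) one has $\log(N/K) = \Theta(1)$, hence $\sqrt{TK\log(N/K)} = \Theta(\sqrt{TK})$, and the bound is already delivered by the standard multi-armed bandit lower bound $\Omega(\sqrt{TK})$ of \cite{auer2002nonstochastic}. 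It then remains to treat the regime where $N \ge cK$ for a suitable constant, in which $n \ge 10$, $\ln(n/10) = \Theta(\ln(N/K))$, $k = \Theta(K)$, and $n = \Theta(N/K)$.

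In this regime, fix an arbitrary BwE algorithm $A$ and let $r(T)$ be any common upper bound on its pseudo-regret $R_T$ over all strategies in our pool. I would set $\epsilon = c \cdot \min\{1, \sqrt{k \ln(n/10)/T}\}$ for a small universal constant $c$, chosen so that simultaneously $0 < \epsilon \le 0.1$ and $k \ln(n/10)/(20\epsilon^2) > T$; the hypothesis $T \ge K \ln(N/K)$ guarantees such an $\epsilon$ exists and that the side conditions required by Lemmas~\ref{lem:one-batch-lower-bound} and~\ref{lem:many-batches-lower-bound} hold. Now suppose toward contradiction that $r(T) < \epsilon T / 1000$. Taking $T^\star = T$, the inequality $T^\star \ge 1000\, r(T^\star)/\epsilon$ holds, so Lemma~\ref{lem:reduction} produces a good SBI algorithm $A'$ with $T(A', S_0) \le T$. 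But Lemma~\ref{lem:many-batches-lower-bound} forces $T(A', S_0) \ge k \ln(n/10)/(20\epsilon^2) > T$, a contradiction. Hence $r(T) \ge \epsilon T / 1000 = \Omega(\sqrt{T k \ln(n/10)}) = \Omega(\sqrt{TK \log(N/K)})$. Since this holds for every common regret bound and the pool has finitely many strategies for fixed $\epsilon$, the worst-case strategy in the pool forces $R_T = \Omega(\sqrt{TK\log(N/K)})$ against $A$, and as pseudo-regret lower-bounds the expected regret, this establishes the theorem.

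The main obstacle I anticipate is the constant bookkeeping: one must pick $c$ (hence $\epsilon$) so that the two upper bounds on $\epsilon$ — the cap $\epsilon \le 0.1$ demanded by the KL/Pinsker analysis and the strict inequality $k\ln(n/10)/(20\epsilon^2) > T$ needed for the contradiction — are compatible using only $T \ge K\ln(N/K)$. This is precisely where the sub-regimes matter: when $\sqrt{k\ln(n/10)/T}$ lies below the $0.1$ cap the choice $\epsilon \asymp \sqrt{k\ln(n/10)/T}$ yields the full $\Omega(\sqrt{Tk\ln(n/10)})$ bound, whereas when it exceeds the cap one takes $\epsilon$ a constant and uses that $T = \Theta(k\ln(n/10))$ there, so that $\Omega(\epsilon T) = \Omega(T) = \Omega(\sqrt{Tk\ln(n/10)})$ still holds. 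Verifying $n \ge 10$ and $\ln(n/10) = \Theta(\ln(N/K))$ after the padding step, and confirming the MAB fallback covers the small-$N/K$ case, are the remaining routine but necessary checks.
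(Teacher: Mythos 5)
Your proposal is correct and follows essentially the same route as the paper: tune $\epsilon \asymp \sqrt{k\ln(n/10)/T}$, apply Lemma~\ref{lem:reduction} with $T^\star = T$ to get a good SBI algorithm, and contradict Lemma~\ref{lem:many-batches-lower-bound}. Your handling of the small-$N/K$ regime (falling back on the $\Omega(\sqrt{TK})$ bandit lower bound when $n < 10$) is in fact more explicit than the paper's ``without loss of generality $n > 10$,'' but it is the same argument in substance.
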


\begin{proof}
    Assume without loss of generality that $N,K$ match our reduced setting ($k:= (K-1)/2 \in \mathbb{N}$ and $n:= (N-1)/k \in \mathbb{N}$), and that $n > 10$.
    Let $T \geq K \ln (N/K)$ and  fix $\epsilon :=  \sqrt{\frac{k \ln (n/10)}{100T}} $. Therefore, $k \frac{\ln(n/10)}{20 \epsilon^2} = 5T$ and $\epsilon \leq 0.1$. Let $A$ be an algorithm for BwE, and suppose it has $R_T \leq \sqrt{T k \ln (n/10)}/100000 := r(T)$. Thus:
    \[
    \frac{1000 r(T)}{\epsilon} = \frac{\sqrt{T k \ln (n/10)}/100 }{\sqrt{\frac{k \ln (n/10)}{100T}}} = T/10 < T.
    \]
    Then, by Lemma~\ref{lem:reduction}, there exists a good SBI algorithm $A'$ with $T(A',S) \leq T = k \frac{\ln(n/10)}{100 \epsilon^2}$ for any strategy $S$ from the pool. This contradicts Lemma~\ref{lem:many-batches-lower-bound}, thus
    \[
     R_T > \sqrt{T k \ln (n/10)}/100000 = \Omega(\sqrt{TK \log (N/K)}),
    \]
    as desired.
\end{proof}

\section{Future work} \label{sec:future}
While in its full generality, the BwE problem allows the adversary to be adaptive, previous lower bounds hold also against an \emph{oblivious} adversary who sets the advice and losses for all rounds in advance. We conjecture that the tight lower bound proved in this work holds against an oblivious adversary as well. More specifically, we conjecture that the oblivious adversary used in \cite{ito2024minimax} achieves the desired lower bound even against an improper learner.

\section*{Acknowledgments}
We thank Emmanuel Esposito for insightful discussions on the works \cite{cesa2025improved, ito2024minimax}.

\bibliographystyle{alphaurl}
\bibliography{bib.bib}

\end{document}